\definecolor{gg}{HTML}{2b3b5e}
\definecolor{Gray}{gray}{0.95}
\definecolor{setgreen}{HTML}{66c2a5}
\definecolor{setorange}{HTML}{fc8d62}
\definecolor{setblue}{HTML}{8da0cb}
\definecolor{setpurp}{HTML}{e78ac3}
\definecolor{AddedText}{HTML}{00792F}
\newtheoremstyle{main}
{1em}                                                %
{1em}                                                %
{\itshape}                                        %
{0pt}                                                %
{\scshape}                                           %
{\\*}                                                %
{2pt}                                                %
{\thmname{#1}\thmnumber{ #2}: \thmnote{\itshape #3}} %
\newtheorem{lemma}{Lemma}[section]
\let\NAT@parse\undefined
\crefname{line}{line}{lines}
\crefname{figure}{Fig.}{Figs.}
\Crefname{figure}{Fig.}{Figs.}
\crefname{equation}{Eq.}{Eqs.}
\Crefname{equation}{Eq.}{Eqs.}
\crefname{section}{Sec.}{Secs.}
\Crefname{section}{Sec.}{Secs.}
\crefname{definition}{Def.}{Defs.}
\Crefname{definition}{Def.}{Defs.}
\crefname{algorithm}{Alg.}{Algs.}
\Crefname{algorithm}{Alg.}{Algs.}
\crefname{assumption}{Asm.}{Asms.}
\Crefname{assumption}{Asm.}{Asms.}
\crefname{subassumption}{Asm.}{Asms.}
\Crefname{subassumption}{Asm.}{Asms.}
\newcommand{\mf}[1]{\mbox{\cref{#1}}\xspace}
\newcommand\blfootnote[1]{%
  \begingroup
  \renewcommand\thefootnote{}\footnote{#1}%
  \addtocounter{footnote}{-1}%
  \endgroup
}
\newcommand{\simd}{\textsc{simd}\xspace}
\newcommand{\simt}{\textsc{simt}\xspace}
\newcommand{\cpu}{\textsc{cpu}\xspace}
\newcommand{\gpu}{\textsc{gpu}\xspace}
\newcommand{\amd}{\textsc{amd}\xspace}
\newcommand{\sbmp}{\textsc{sbmp}\xspace}
\newcommand{\sbmps}{\textsc{sbmp}s\xspace}
\newcommand{\fcl}{\textsc{fcl}\xspace}
\newcommand{\eg}{\emph{e.g.},\xspace}
\newcommand{\ie}{\emph{i.e.},\xspace}
\newcommand{\dof}{\textsc{d\scalebox{.8}{o}f}\xspace}
\newcommand{\flann}{\textsc{flann}\xspace}
\newcommand{\nerfs}{\textsc{n\scalebox{0.8}{e}rf}s\xspace}
\newcommand{\nanoflann}{Nano\textsc{flann}\xspace}
\newcommand{\gnat}{\textsc{gnat}\xspace}
\newcommand{\delete}[1]{}
\newcommand{\add}[1]{#1}
\newcommand{\nameoftechnique}{collision-affording point tree}
\newcommand{\capt}{\textsc{capt}\xspace}
\newcommand{\capts}{\textsc{capt}s\xspace}
\begin{document}

\title{Collision-Affording Point Trees: SIMD-Amenable Nearest Neighbors for Fast Collision Checking}

\author{
  \authorblockN{Clayton W. Ramsey, Zachary Kingston\authorrefmark{1}, Wil Thomason\authorrefmark{1}, and Lydia E. Kavraki}
  \authorblockA{Rice University\\
  \texttt{\{clayton.w.ramsey,zak,wbthomason,kavraki\}@rice.edu}\\
  \authorrefmark{1}Equal Contribution.}
}

\maketitle

\begin{abstract}
Motion planning against sensor data is often a critical bottleneck in real-time robot control.
For sampling-based motion planners, which are effective for high-dimensional systems such as manipulators, the most time-intensive component is collision checking.
We present a novel spatial data structure, the \nameoftechnique{} (\capt{}): an exact representation of point clouds that accelerates collision-checking queries between robots and point clouds by an order of magnitude, with an average query time of less than 10 nanoseconds \add{on 3D scenes comprising thousands of points}.
With the \capt, sampling-based planners can generate valid, high-quality paths in under a millisecond, with total end-to-end computation time faster than 60 FPS, on a single thread of a consumer-grade \cpu.
We also present a point cloud filtering algorithm, based on space-filling curves, which reduces the number of points in a point cloud while preserving structure.
Our approach enables robots to plan at real-time speeds in sensed environments, opening up potential uses of planning for high-dimensional systems in dynamic, changing, and unmodeled environments.
\end{abstract}

\blfootnote{
    This work was supported by NSF CCF 2336612, NSF ITR 2127309 for the CRA CIFellows Project, and Rice University Funds.
}

\section{Introduction}\label{sec:intro}
Motion planning underpins many applications of high-degree-of-freedom robots, allowing them to efficiently find collision-free trajectories between arbitrary poses.
Modern motion planning methods capably solve problems with many obstacles for these high-dimensional robots, typically by either building and searching a graph or tree approximating the collision-free subset of the robot's state space (\ie{} sampling-based motion planning (\sbmp)~\cite{orthey2023sampling,LaValle2001,Kavraki1996}) or by solving a numerical optimization problem (\ie{} trajectory optimization~\cite{Schulman2014,Zucker2013,bhardwaj_storm_integrated_2021}).
The most time-consuming component of most motion planners---and \sbmps{} in particular---is \emph{state validation}, which ensures that a robot's state does not violate its constraints or collide with obstacles~\cite{bialkowski_massively_parallelizing_2011,lavalle2006planning}.
State validators commonly assume knowledge of the precise geometries and positions of all obstacles in the environment---an assumption that does not hold for general real-world settings where only sensed representations of the world may be available.
Earlier work that checks for collisions between robot geometries and point clouds~\cite{Schauer2015,Hornung2013,Pan2012a,Pan2013,Danielczuk2021,Murali2023} attempts to lift this assumption, but point cloud collision checking remains a relatively slow bottleneck for motion planning.

Recent work~\cite{Thomason2023VAMP,sundaralingam2023curobo,Vasilopoulos2023,le2024accelerating} has produced new approaches to hardware-accelerated motion planning that exploit parallelism in collision checking and other core planning operations to find complete trajectories in microseconds to milliseconds.
This use of parallelism motivates the need for higher-throughput parallelism-friendly algorithms and data structures for efficiently planning collision-free motions in environments that are only perceived as point clouds, \eg{} from a common depth camera like the Intel RealSense.
\citet{sundaralingam2023curobo} use \gpu{} parallelism to allow batched querying of an approximate Euclidean Signed Distance Field (\textsc{esdf}) built from a series of sensor measurements~\cite{Millane2023}; similarly, \citet{Vasilopoulos2023} perform a \gpu{}-parallel brute-force \textsc{sdf} computation between a discretized set of points on the robot geometry and a dense point cloud.
Although these \gpu{}-based methods are promising for some applications, data synchronization costs between the \gpu{} and \cpu{} limit the direct applicability of these techniques for motion planning algorithms, many of which are \cpu{}-based.
Further, for applications of field robotics such as planetary rovers, agricultural robots, and others, the power requirements of an onboard \gpu{} may be untenable.

In this work, we propose a data structure and associated construction and search algorithms for \emph{exact} point cloud distance computation and collision checking.
Our proposed data structure, the \emph{\nameoftechnique} (\capt{}), adapts and refines concepts from the classical $k$-d tree to support efficient parallel evaluation.
The core insights guiding our design of the \capt{} are that
\begin{enumerate*}[label=(\arabic*)]
    \item exploiting the spatial correlation present in \sbmp{} edge validation collision queries allows for aggressive early-termination of batched queries without sacrificing correctness, and that
    \item many motion planning problems (\eg{} in manipulation) only need to represent a relatively small, \emph{local} part of the environment for collision checking
\end{enumerate*}.
This first insight extends ideas from~\citet{Thomason2023VAMP}; the second enables us to rethink traditional assumptions about minimizing memory overhead in point cloud representations: we in fact duplicate select subsets of points to create a parallelism-friendly data model.
By combining these insights with a machine-sympathetic data structure and the use of data-level parallelism, \capts{} can return collision results \add{against an observed 3D point cloud} in a mean time of \textbf{under ten nanoseconds per query} on a single core of a consumer desktop \cpu{}\footnote{\add{The point clouds used to generate these collision-checking throughput results contained up to 50000 points and had mean dispersions between 7\si{\milli \meter} and 2.2\si{\centi \meter}. For detailed analysis, refer to the results in \cref{subsec:throughputresults} and \cref{app:dispersion};  for examples of such point clouds, see \cref{subfig:realsense_filtered} and \cref{subfig:pc_filtered}.}}.
Although we focus on \cpu{}-based \emph{single-instruction, multiple-data} (\simd) parallelism in this paper, our approach also applies to and may benefit \gpu{}-based planners using a \emph{single-instruction, multiple-thread} (\simt) parallelism model.

Concretely, we contribute:
\begin{enumerate}
	\item the \nameoftechnique{} (\capt{}), a novel data structure for storing sensed point clouds for collision checking.
	\item  efficient construction, branch-free parallel query, and collision check algorithms for \capts.
	\item a method for efficient point cloud down-sampling by exploiting properties of space-filling curves.
	\item proofs of correctness (\ie{} that using \capts{} and our filtering algorithm does not modify planning problem feasibility).
	\item empirical evaluations for collision throughput and error against a set of competitive baselines.
	\item integration with a vectorized motion planner~\cite{Thomason2023VAMP}, demonstrating the utility of \capts{} for high-performance \sbmp on a number of difficult and cluttered problems.
	\item \add{a proof-of-concept demonstration with a depth camera and physical robot hardware.}
    \item \add{an open source implementation of \capts{}\footnote{Available at \href{https://github.com/KavrakiLab/vamp}{https://github.com/kavrakilab/vamp}.}.}
\end{enumerate}
\begin{figure*}
    \centering
    \subfloat[]{
        \includegraphics[width=0.26\textwidth]{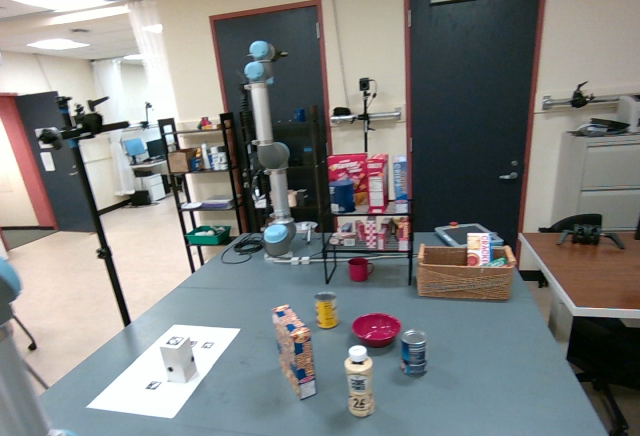}
        \label{subfig:realsense_color}
    }
    \hspace{2mm}
    \subfloat[]{
        \includegraphics[width=0.28\textwidth]{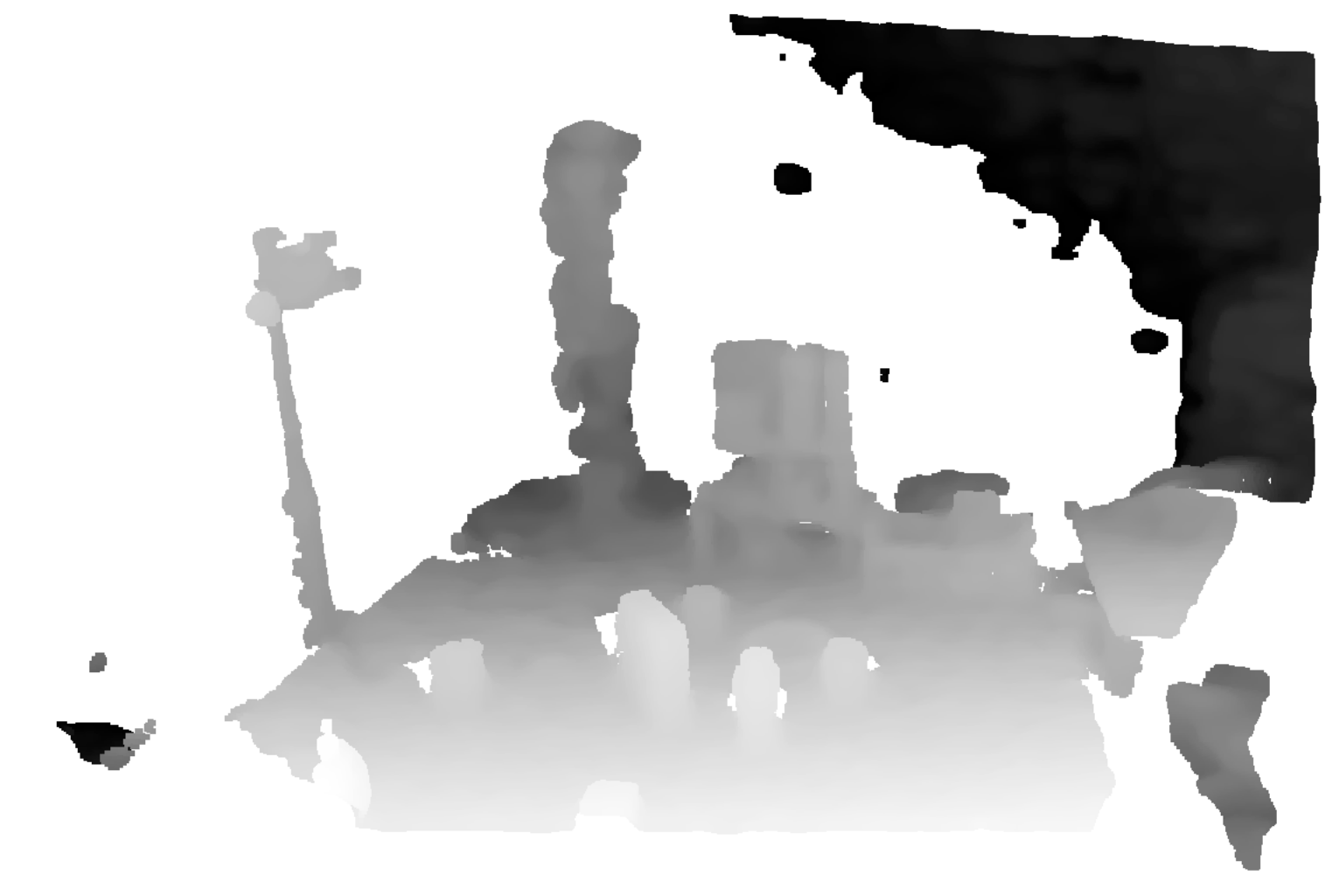}
        \label{subfig:realsense_depth}
    }
        \hspace{2mm}
    \subfloat[]{
        \includegraphics[width=0.28\textwidth]{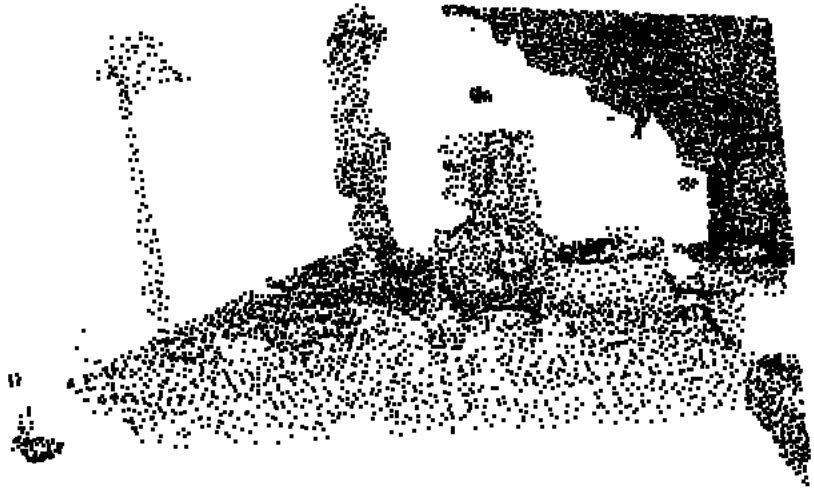}
        \label{subfig:realsense_filtered}
    }
    \caption{
        \ref{subfig:realsense_color}, \ref{subfig:realsense_depth}: A cluttered tabletop scene, captured as RGB-D with an Intel Realsense D455 sensor. 
        \ref{subfig:realsense_filtered}: The point cloud rendering of the same scene, filtered using our proposed space-filling curve method (\cref{sec:filtering}). 
    }
    \label{fig:realsense}
\end{figure*}

\section{Related Work}\label{sec:related}

Despite the inherent computational complexity of motion planning, which is known to be \textsc{pspace}-complete~\cite{reif1979complexity,canny1988complexity}, \sbmps{}~\cite{Kavraki1996,LaValle2001} are nonetheless capable of solving many motion planning problems in tens to hundreds of milliseconds, given explicit representations of the problem environment.
\citet{Thomason2023VAMP} proposed an approach to accelerating \sbmps{} via \simd{} parallelism, decreasing planning times to the order of microseconds---sufficient for real time operation.
Much of this speedup is due to the use of fine-grained parallelism in collision checking; however, this planner still requires an explicit, known model of the environment's geometry.
Particularly, in real-world planning problems, explicit geometry models are often unavailable---thus, planning from sensor data has been desirable since the beginning of the field~\cite{Thrun2002}.

\subsection{Planning from sensor data}

Planning-amenable representations of sensor data usually take the form of some space-partitioning data structure.
$k$-d trees~\cite{Bentley1975} are particularly relevant data structures for point cloud representation.
They are often used for nearest-neighbor search in low-dimensional spaces~\cite{Ram2019, Chen2019, Pinkham2020}, and allow for logarithmic-time collision-checking against points in a point cloud~\cite{Schauer2015}.
These trees can also be augmented with a sphere covering~\cite{Klein2004} for efficient rejection of non-colliding geometry.
Many efforts have been made to accelerate $k$-d tree queries by low-level optimization, including through parallel subtree search~\cite{Chen2023a} and dedicated instruction-set architectures~\cite{E.Becker2023}.
\flann{}~\cite{muja2009flann,blanco2014nanoflann} and Nigh~\cite{ichnowski2018nigh} are popular implementations of $k$-d trees; the former supports approximate nearest-neighbor search, while the latter is exact.

The most popular approaches to representing sensor data for \sbmp{} collision checking often use \emph{occupancy maps}~\cite{moravec_occupancy_1985,thrun1996integrating,Rusu2009}.
OctoMaps~\cite{Hornung2013} implement a probabilistic occupancy map using octrees~\cite{meagher1982geometric} of voxels, where each voxel is considered "occupied" based on Bayesian updates computed upon each new point cloud inserted into the map.
They allow for collision-checking based on bounding-volume hierarchies, which use a branch-and-prune search to limit the set of possible points for collision checking.
OctoMaps are also integrated with collision checking libraries, \eg the Flexible Collision Library \cite{Pan2012a}, and popular planning frameworks such as MoveIt~\cite{chitta2012moveit}.
They are a commonly used representation in practice, \eg for underwater vehicles~\cite{VidalGarcia2019}, subterranean exploration~\cite{dang2020graph}, autonomous vehicles~\cite{badue2021self}, and aerial vehicles~\cite{lu2018survey}. 

Voxel-based approaches have also seen significant use recently due to 
many optimization-based planners~\cite{Zucker2013, Vasilopoulos2023} using signed distance fields for collision-checking---the representation of the signed-distance field is backed by a voxel representation~\cite{Newcombe2011, Whelan2015}.
CuRobo \cite{sundaralingam2023curobo} is an optimization-based planner which uses \gpu{} parallelism for high-efficiency collision-checking, including against point clouds.
It relies on the NVBlox~\cite{Millane2023} \gpu-accelerated signed-distance field library for its collision-checking and optimization.
VoxBlox~\cite{Oleynikova2017} and VoxGraph~\cite{Reijgwart2020} are also \cpu{}-based sensor-based mapping tools for constructing signed-distance fields for collision-checking.

Recently, there has also been interest in using implicit representations of the environment, such as neural radiance fields (\nerfs{})~\cite{Mildenhall2021}, which have been used for motion planning~\cite{Adamkiewicz2022, Chen2023}.
However, constructing \nerfs{}, while relatively fast~\cite{Mueller2022}, still take on the order of seconds to construct with \gpu hardware, making them infeasible for online planning.

\subsection{Space-filling curves}

Space-filling curves are continuous real bijections that map every point of a one-dimensional line to a higher-dimensional space, such as $\mathbb{R}^3$. 
Z-order curves~\cite{morton1966computer}, also known as Morton curves, are a class of space-filling curve often used for nearest-neighbor applications.
A point in a high-dimensional space can be projected onto a Z-order curve by interleaving the bits of the binary representation of its coordinates.
If two points' projections onto the curve are close together, they are likely to also be near in the higher-dimensional pre-image space.
\citet{Ying2023} used a Z-order curve to produce a low-discrepancy sub-sampling of a point cloud; however, their sub-sampling procedure does not guarantee that the overall point cloud structure is preserved.
Likewise,~\citet{connor2010morton} used a Z-order sorting to accelerate construction of $k$-nearest-neighbor graphs by limiting their search to a single range of the space-filling curve.

\subsection{SIMD parallelism and planning}

Parallel motion planning algorithms~\cite{Thomason2023VAMP,sundaralingam2023curobo,Ichnowski2012,Plaku2005,amato_probabilistic_roadmap_1999,jacobs_scalable_method_2012,bialkowski_massively_parallelizing_2011,Pan2012,le2024accelerating} require parallelizable collision-checking data structures.
Further, to reap the benefits of early-termination in collision checking that enable \simd-accelerated planning to plan at the microsecond scale, a \simd-amenable data structure is required.
However, the major data structures used in planning for sensor data representation are not amenable to this flavor of parallelism---hierarchical space-subdividing data structures (\eg~OctoMaps, $k$-d trees, other voxel grids) require conditional-branch-heavy searches through large subtrees, which result in a highly sub-optimal memory access pattern: collision checks in these data structures must access potentially many fragmented segments of memory.
Additionally, the branching nature of these searches makes them impractical to adapt to the branchless computation framework that \simd{} parallelism best matches.
To overcome these issues, we present a new data structure, designed with branchless, cache-friendly access in mind and demonstrate its effectiveness for efficient collision checking.

\section{Preliminaries}\label{sec:prelim}

A $k$-d tree~\add{\cite{friedman1977kdtree}} is a class of space partitioning tree which represents and organizes a set of $n$ points, $\mathit{PC} \subseteq \mathbb{R}^k$.
Each leaf of the tree corresponds to a \emph{cell}, which is an axis aligned bounding box in $\mathbb{R}^k$ containing exactly one \emph{representative point}, $p_r \in \mathit{PC}$.
The union of a $k$-d tree's cells spans the entirety of $\mathbb{R}^k$
\footnote{Slightly abusing the notion of an AABB, the cells on the boundary of the $k$-d tree are half-open.} and the union of all representative points is equal to $\mathit{PC}$.
Each branch of the tree partitions $\mathbb{R}^k$ about an axis-aligned hyperplane: a branch at depth $d$ in the tree with test value $t$ partitions the space such that, for any point $p \in \mathit{PC}$, if $p\lbrack d \mod k\rbrack \leq t$ (where $p\lbrack i \rbrack$ is the $i$-dimension component of $p$), it belongs to the left sub-tree of the branch; otherwise, it belongs to the right sub-tree.

We can construct a $k$-d tree from a point cloud $\mathit{PC}$ in $O(k n \log n)$ time using a recursive partitioning algorithm.
At each level, we split $\mathit{PC}$ into two equally-sized subsets $B_1, B_2$ by a hyperplane intersecting the median point in $\mathit{PC}$ along axis $d$. 
This splitting continues recursively for $B_1$ and $B_2$ until the point cloud contains exactly one point. 
Each splitting hyperplane forms one branch in the tree, and the walls of each leaf's cell are the splitting hyperplanes of its parent branches for each dimension.

$k$-d trees are most commonly used for nearest-neighbor search. 
Given some query point $x$, a recursive branch-and-bound search through the tree can find the closest element of $\mathit{PC}$ to $x$ in $O(\log n)$ operations~\add{\cite{friedman1977kdtree}}.

We can use the nearest-neighbor facility of a $k$-d tree to check for collisions between a sphere and a point cloud.
Given a sphere centered at $x$ with radius $r$, the aforementioned branch-and-bound search procedure can find the nearest point $p \in \mathit{PC}$ to $x$. 
Then, if $\|x - p\| \leq r$, the sphere is in collision; otherwise,  there must be no point in $\mathit{PC}$ whose distance to $x$ is less than $r$, and $x$ therefore does not collide with $\mathit{PC}$.

\section{Method}\label{sec:method}
\capts{} redesign aspects of the classic $k$-d tree to make it amenable to high-throughput parallel querying for robot collision checking.
The major problems with directly using a $k$-d tree for this purpose are
\begin{enumerate*}[label=(\arabic*)]
    \item cache-unfriendly random memory access patterns that arise from the tree's storage representation, and
    \item the inherently conditional-branch-heavy backtracking recursive algorithm used for normal $k$-d tree nearest-neighbor queries
\end{enumerate*}.
Accordingly, the defining features of a \nameoftechnique, as opposed to a $k$-d tree, are its use of a memory layout that improves cache coherency during tree traversal, and that each leaf of a \nameoftechnique{} contains an \emph{affordance set}, a conservative approximation of the possible nearest-neighbors to any point in the cell.
This deceptively simple change allows the implementation of a search to avoid the backtracking stage of a search through a $k$-d tree, enabling branch-free, parallelism-friendly exact collision checking in a fraction of the time.

To use a \capt{} to check for robot collision, we first assume that the robot is made up of some set of spheres $S$ (as in other motion planning work, \eg~\cite{Thomason2023VAMP,sundaralingam2023curobo,mukadam_continuoustime_gaussian_2018}).
Non-spherical robots can be approximated conservatively by constructing a spherical bounding volume hierarchy~\cite{Bradshaw2004} which contains all of their collision geometry.
Let the smallest sphere of the robot's geometry have some known radius $r_\text{min}$ and the largest sphere of the robot's geometry have some known radius $r_\text{max}$.
Then, given a query sphere with center $x$ and radius $r: r_\text{min} \leq r \leq r_\text{max}$, we can use the \capt{} to check if the sphere is in collision.
First, we search through the tree to find the leaf cell of the tree which contains $x$.
If any point in the leaf's affordance set collides with the sphere, then the sphere is in collision; otherwise, the query sphere is not in collision.

We begin by explaining the \nameoftechnique{} and its construction process in \cref{sec:capt,sec:construction}. 
Next, we detail a point cloud filtering algorithm based upon space-filling curves in \cref{sec:filtering}.
Lastly, we describe our branch-free parallel collision-checking algorithm for \capts{} in \cref{sec:search}.

\subsection{The collision-affording point tree}\label{sec:capt}

\begin{figure}
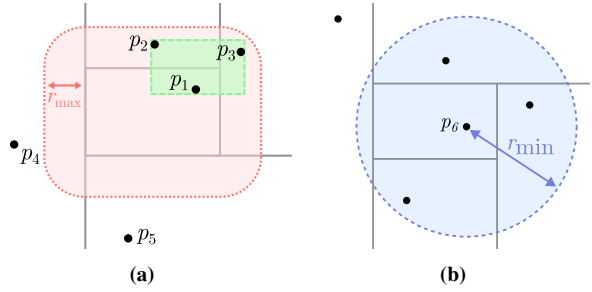

    \centering
    \subfloat[]{
        \includesvg[height=0.18\textwidth]{figures/affordance}
        \label{subfig:affordance}}
    \hspace{2mm}
    \subfloat[]{
    \includesvg[height=0.18\textwidth]{figures/rmin}
        \label{subfig:rmin}}
    \caption{
        \ref{subfig:affordance}: The cell containing $p_1$ affords $p_2$ and $p_3$ at radius $r_\text{max}$, but not $p_4$ or $p_5$. 
        The axis-aligned bounding box containing all afforded points is depicted in green.
        \ref{subfig:rmin}: The sphere centered at $p_6$ of radius $r_\text{min}$ contains the entire cell, so no other points need to be included in the cell's affordance set.
    }\label{fig:affordanceset}
    
\end{figure}

In a \capt{}, each leaf of the tree contains the affordance set for the leaf's corresponding cell, illustrated in~\cref{fig:affordanceset}.
Given the cell $c$ corresponding to a leaf, that leaf's affordance set is the set of all points in $\mathit{PC}$ such that $c$ \emph{affords} collision with $l$ at the radius $r_\text{max}$.
A cell $c$ affords a point $p$ at radius $r$ if there exists some point $q \in c$ such that $\|p - q\| \leq r$, as depicted in \cref{subfig:affordance}.
Intuitively, a point $p$ is afforded by a cell $c$ if a sphere of radius $r$ whose center is contained by $c$ could collide with $p$.
Finally, each leaf is associated with a \emph{second} axis-aligned bounding box. 
This bounding box is \emph{not} the same as the cell; instead, it is the minimal bounding box containing all points in the leaf's affordance set, as shown in green in \cref{subfig:affordance}.

For simplicity, we also assume that $\mathit{PC}$ contains $n$ points, such that $n$ is a power of two. 
If $n$ is not a power of two, we pad $\mathit{PC}$ with points at $\infty$ to the next greatest power of two.

A \capt{} is then the tuple $(T, A, P)$, such that the test sequence $T$ is an array of $n - 1$ test values in $\mathbb{R} \cup \{\infty\}$, $A$ is an array of $n$ axis-aligned bounding boxes over $\mathbb{R}^k$, and the affordance table $P$ is a ragged two-dimensional array of $n$ different affordance sets. 
This representation is \emph{implicit}: the tree does not store any information about its branches other than in the test sequence $T$.

We arrange $T$ according to an Eytzinger layout, a class of array-backed implicit tree layout originally used for heaps~\cite{williams1964heapsort}. 
This layout reduces memory fragmentation by storing all data in a single contiguous block, and also improves performance by allowing for branch-free traversal.
In such a layout, $T_0$ corresponds to the root branch of the tree: all points whose $x$-value is less than $T_0$ belong to the left sub-tree, while all others belong to the right sub-tree.
Next, $T_1$ and $T_2$ correspond to the first branches in the left and right sub-trees about the $y$-value of each point.
Recursively, if $T_i$ corresponds to a partition of the tree about the dimension $d$, then $T_{2i+1}$ corresponds to the next branch in the left sub-tree, while $T_{2i+2}$ corresponds to the next branch in the right sub-tree, both of which split on dimension $d + 1 \mod k$.
\add{The} value of $T_i$ at depth $d$ is the median value of $p\lbrack d \mod k\rbrack$ across all representative points $p$ in its sub-tree.
\add{Intuitively, each $T_i$ partitions the space by a new axis-aligned hyperplane, splitting the points in its subtree in half.
For simplicity, we choose to partition on a repeating sequence of axes every time (first along the $x$-axis, then the $y$-axis, and so on), but could substitute other methods, such as randomly selecting an axis.}

\begin{algorithm}
    \caption{Construct}
    \label{alg:construct}
    \KwIn{Point cloud $\mathit{PC}$ containing $n$ points of dimension $k$, minimum query radius $r_{\text{min}}$, maximum query radius $r_\text{max}$, axis-aligned bounding box $c$, affordance set $z$, uninitialized \capt{} $(T, A, P)$, index $i$, dimension index $d$}
    {

        \If{$|PC| = 1$}{

            \add{
            $x \leftarrow$ only element of $\mathit{PC}$;}

            \If{$\exists{q \in c : \|q - x\| > r_\text{min}}$}{

                \add{$\mathit{PC} \leftarrow \mathit{PC} \cup z$;}
            
            }

            \add{$a \leftarrow $ bounding box containing all points in $\mathit{PC}$;}

            \add{append $\mathit{PC}$ onto $P$;
            }

            append $a$ onto $A$;
            
        }

        \Else{

            \add{$T_i \leftarrow$ median value of $p_d$ for all $p \in \mathit{PC}$;}

            $B_1 \leftarrow \{p \in \mathit{PC} : p_d \leq T_i\}$;

            $B_2 \leftarrow \{p \in \mathit{PC} : p_d > T_i\}$;

            $c_1, c_2 \leftarrow c$;

            shrink the upper bound of $c_1$ on dimension $d$ to $T_i$;
            
            raise the lower bound of $c_2$ on dimension $d$ to $T_i$;

            $z_1 \leftarrow \{p \in z \cup B_2 : \text{$c_1$ affords $p$ at $r_\text{max}$}\}$;
            
            $z_2 \leftarrow \{p \in z \cup B_1 : \text{$c_2$ affords $p$ at $r_\text{max}$}\}$;

            Construct($B_1, r_\text{min}, r_\text{max}, c_1, z_1, (T, A, P), 2i + 1, (d + 1) \mod k$);

            Construct($B_2, r_\text{min}, r_\text{max}, c_2, z_2, (T, A, P), 2i + 2, (d + 1) \mod k$);
        }
        
    }
\end{algorithm}

\subsection{Collision-affording point tree construction}\label{sec:construction}
To construct a \capt{}, we apply the same recursive partitioning approach as in $k$-d tree construction, using quick-select~\cite{hoare1961quickselect} for an expected linear-time selection of the median value for each partition. 
The exact construction algorithm is specified in \cref{alg:construct}.
At each step of the construction procedure, we retain two additional sets of information: the current cell $c$ and the current afforded set $z$. 
$c$ is initialized to the cell containing all of $\mathbb{R}^k$, while $z$ is initialized to the empty set.
Every time we split the point cloud along a median plane, we split $c$ about the same median plane into two adjacent cells, $c_1$ and $c_2$.
Next, we duplicate $z$ to produce two new affordance sets, $z_1$ and $z_2$.
We expand $z_1$ to include all points in the cloud contained by $c_2$, and vice versa for $z_2$.
Finally, we filter out all points from $z_1$ which are not afforded by $c_1$, and likewise with $z_2$.
This process can be thought of as maintaining the set of all points outside of a cell $c$ that are still close enough to $c$ that they may collide with a query sphere centered in $c$.
Once each cell contains exactly one point, we determine the final affordance set of that cell's leaf as the union of the set containing the representative point and all points afforded by the leaf cell. 

We can use knowledge of $r_\text{min}$ to prune the affordance set slightly more than the conservative approximation created by the construction procedure.
If all points $x_c \in c$ are so close to the representative point $p$ that $\|x_c - p\| \leq r_\text{min}$, then all spheres with center $x \in c$ and radius $r \geq r_\text{min}$ will collide with $c$'s representative point $p$, as shown in \cref{subfig:rmin}.
If this is the case, then there is no need to include any points outside of $c$ in the affordance set, since we know that any query sphere centered in $c$ is already in collision.
Therefore, for these sufficiently small cells, we can simply store $\{p\}$ as the affordance set, without including any other points.

\subsection{Filtering}
\label{sec:filtering}

\begin{figure}
    \centering
    \subfloat[]{
        \includegraphics[height=0.18\textwidth]{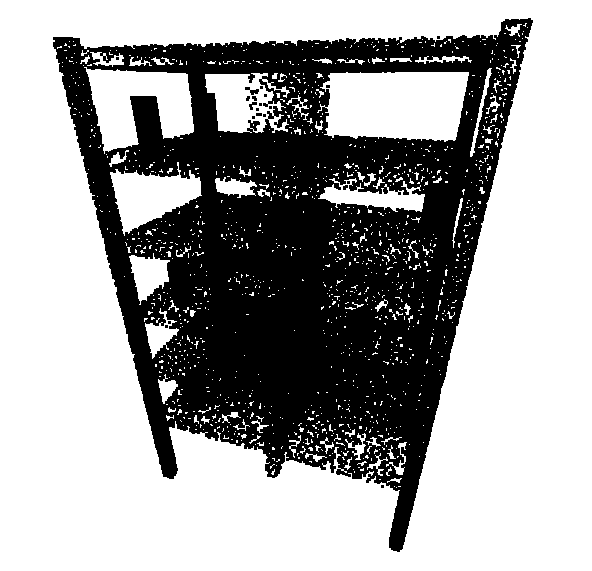}
        \label{subfig:pc_unfiltered}
    }
    \hspace{2mm}
    \subfloat[]{
        \includegraphics[height=0.18\textwidth]{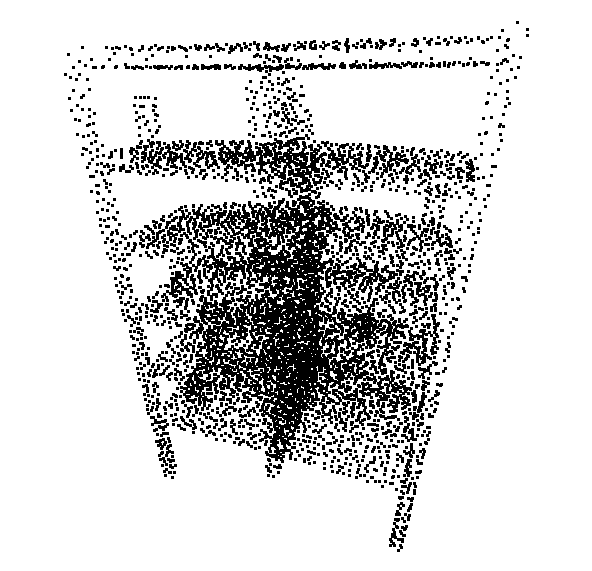}
        \label{subfig:pc_filtered}
    }
    \caption{
        \ref{subfig:pc_unfiltered}: A point cloud with 126000 points, created by randomly sampling the surface of a shelf. 
        \ref{subfig:pc_filtered}: The same point cloud, filtered to 8889 points using $r_\text{filter} = 2 \si{\centi \meter}$. 
    }
    \label{fig:pc}
\end{figure}

Not all points in an input point cloud are required for collision-checking, especially in extremely dense clouds, as there are many redundant points when conservatively approximating the colliding volume of the cloud.
As the construction time of \nameoftechnique{}s grows with the size of the point cloud, we introduce an efficient filtering procedure to significantly reduce the density of the cloud that also guarantees that it will not remove critical points. 

As the cloud is used for collision checking, it cannot remove a point $p$ unless there is another point $p^*$ in the cloud sufficiently close to $p$, such that $\|p - p^*\|$ is less than some threshold radius $r_\text{filter}$.
This guarantees that the robot cannot penetrate further than $r_\text{filter}$ into the point cloud; alternately, all spheres of the robot may be padded by $r_\text{filter}$ to produce a conservative approximation of the cloud.
Lastly, this filtering procedure must be computationally cheap: it must be significantly faster than the construction time of the \capt{} to achieve any useful speedup.

In order to verify that any removed point $p$ has a sufficiently close neighbor $p^*$, we must compute the distance between $p$ and $p^*$. 
A naive algorithm would compare every two points in $\mathit{PC}$, but this approach yields $O(n^2)$ runtime, which is unacceptable for large point clouds.
Instead, we reduce the candidate set of pairs by exclusively comparing points which are relatively near to one another, according to an arbitrary measure of locality.

Space-filling curves (specifically in our implementation, Z-order curves~\cite{morton1966computer}) provide such a measure of locality: we use one to map all points in $\mathit{PC}$ to a one-dimensional space-filling curve. 
Once sorted in order of their position along the curve, points which are adjacent in the curve are also likely to be adjacent in their higher-dimensional space.
Therefore, by exclusively checking the distance between neighboring points in the space-filling curve, we can dramatically reduce the point cloud size in $O(n \log n)$ time (\cref{alg:filter}).

However, nearby points in a higher-dimensional space are not guaranteed to be adjacent in a fixed space-filling curve.
We mitigate this by checking for neighbors on multiple space-filling curves, one for each permutation of dimensions.
If two points are near to each other, then it is likely that at least one such curve will place them adjacent to one another.
Repeating the filtering process on each permutation of dimensions means that the filtering procedure scales with $O(k! n \log n)$, but for $k=3$, there are only six such permutations, so the cost of extra filter checks is minimal compared to the savings in point cloud size.

\begin{algorithm}
    \caption{Filter}
    \label{alg:filter}
    \KwIn{list of points $\mathit{PC}$, filter radius $r_\text{filter}$}
    \KwOut{list of filtered points $\mathit{PC}' \subseteq \mathit{PC}$}

    $\mathit{PC}' \leftarrow \mathit{PC}$;

    \ForEach{permutation $X$ of dimensions}{
    
        sort \add{$\mathit{PC}'$} along a Z-order curve by dimension order $X$;
        
        $i \leftarrow 0$;
        
        \ForEach{$j \leftarrow 1, 2, \cdots |PC'| - 1$}{
            \If{$\|\mathit{PC}'_i - \mathit{PC}'_j\| > r_\text{filter}$}{
                $i \leftarrow i + 1$;
                
                $\mathit{PC}'_i \leftarrow \mathit{PC}'_j$;
            }
        }

        $\mathit{PC}' \leftarrow \mathit{PC}'_{0 \cdots i}$;
    }

    \Return $\mathit{PC}'$;
\end{algorithm}

Additionally, we filter out any points which we can prove will never be in collision with the robot. 
This process is simple for fixed-base arm robots: a point can only collide with a robot if its distance to the base link of the robot is less than the maximum extension length of the arm.

All together, this filter can achieve dramatic reductions in point cloud size even for small values of $r_\text{filter}$, filtering clouds with over a hundred thousand points to less than ten thousand in a few milliseconds.
As shown in \cref{fig:pc}, relatively conservative values of $r_\text{filter}$ cull the point cloud by a dramatic amount; additionally, the filtering process significantly reduces point cloud density, reducing the expected colliding-set size and therefore improving tree construction and query times.

\subsection{Collision querying}\label{sec:search}

When collision-checking for a robot, a robot's configuration is valid only if all of the robot's physical geometry is not in collision with the environment. 
If any sphere of the robot's geometry is in collision with the environment, then the entire configuration is invalid.
This provides us with an early-termination condition: we need only find a single colliding sphere to invalidate an entire configuration.
By parallelizing collision checks across multiple query spheres, the entire search can terminate as soon as one collision is found.

The first step of a collision check is a search through the tree, as outlined in lines 1-7 of \cref{alg:search}.
Given some query sphere with center $x$ and radius $r$, the search begins with a test index $i=0$ and dimension $d=0$. 
Then, at each step of the search, if $x_d < T_i$, $i$ is updated to $2i + 1$, or $2i + 2$ otherwise, while $d$ is updated to $d + 1 \mod k$. 
This is the same update rule for the Eytzinger layout as used during construction: index $2i + 1$ corresponds to the left sub-tree, while index $2i + 2$ corresponds to the right subtree.
When the search completes, the final value of $i$ is an integer in the range $[n - 1, 2n - 1)$, with each value corresponding to a unique leaf of the tree. $i - n + 1$ is an integer in the range $[0, n)$ corresponding to each point stored at a leaf of the tree. 
This traversal can be performed branchlessly by converting the boolean value $x_d \geq T_i$ comparison into an integer $l$, assigning $i \leftarrow 2i + 1 + l$. 
Since $n$ is a power of two, all traversals of the tree terminate in exactly the same number of iterations, and we do not need to test for reaching the end of the tree.
Likewise, the memory access pattern of the traversal is extremely predictable: all accesses at a given iteration are restricted to a small set of possible values.
The branchless traversal sequence allows for efficient \simd{} parallelism: each lane of a single register contains a different test index, and each load, comparison, and index update is parallel, providing a large improvement in performance.

After determining which cell contains $x$, the search algorithm first performs an efficient collision check between the query sphere and the axis-aligned bounding box $A_{i - n + 1}$, which contains all points afforded by the cell. 
This step occurs in lines 8-9 of \cref{alg:search}.
This check does not change the final output of the search algorithm; instead, it simply filters out spheres which can be trivially proven not to collide with any points to reduce the number of expensive traversals through the affordance set.
This step is once again parallelizable across multiple queries: for each \simd{} lane of the processor, we can compute the distance from each query sphere to its leaf's bounding box in parallel instead of sequentially.
During a parallel query, if spheres are not in collision with the bounding box, then they can be masked out from all remaining collision checks, reducing the overall search time.
If no spheres in the query set collide with the axis aligned bounding box, then the query set is provably not in collision, and the search can terminate immediately.

Finally, the collision-checking procedure exhaustively checks for collision between the query sphere and all points $p \in P_{i - n + 1}$. If $x$ is nearer to any $p$ than $r$, then it is in collision.
This step of collision-checking occurs in lines 10-13 of \cref{alg:search}.
We parallelize this step differently from the previous two: instead of parallelizing across the set of query spheres, we parallelize across the set of test points in each affordance set.
This is all for cache locality: parallelizing across the test points means that all memory accesses are in the same contiguous region, instead of requiring inefficient gather instructions across multiple different affordance sets.
Such parallelism would not be possible with a conventional $k$-d tree, as the set of possibly-colliding points is not known to the search until it explores each sub-tree.

\begin{algorithm}
    \caption{CollisionCheck}
    \label{alg:search}
    \KwIn{\capt{} $(T, A, P)$ containing $n$ points in $\mathbb{R}^k$, sphere $s$ with center $x$ and radius $r$}
    \KwOut{Whether $s$ collides with any point in the tree}
    
    $d, i \leftarrow 0$;
    
    \While{$i < n - 1$}{
    
        \If{$x_d \leq T_i$}{
        
            $i \leftarrow 2i + 1$;
            
        }
        \Else{
        
            $i \leftarrow 2i + 2$;
            
        }
        
        $d \leftarrow d + 1 \mod k$;
    }

    \If{$A_{i - n + 1}$ does not intersect $s$}{
        \Return false;
    }

    \For{$p \in P_{i - n + 1}$}{
        \If{$\|x - p\| \leq r$}{
            \Return true;
        }
    }

    \Return false;
    
\end{algorithm}

\section{Analysis}\label{sec:analysis}
\subsection{Runtime}

The runtime of the construction procedure is dictated by two sub-procedures: the partitioning of the space, which is the same as a $k$-d tree at $O(k n \log n)$ for a point cloud with $n$ points; and the construction of the final affordance set, which requires $O(ka)$ time for each point, where $a$ is the maximum size of an affordance set. 
Therefore the total runtime of construction is $O(k n \log n + k n a)$. 
When the dispersion of the point cloud is high, $a$ tends to be small, so the construction runtime is $O(k n \log n)$. 
However, for extremely low-dispersion point clouds, all points in the point cloud are afforded by each cell, so $a = O(n)$, yielding a much larger construction runtime of $O(k n^2)$. 
In total, a \capt{} consumes $O(kna)$ memory, which may be as much as $O(k n^2)$ for extremely low-dispersion clouds.

In total, each collision query against the \nameoftechnique{} performs $O(\log n)$ comparisons while traversing the tree, then $O(a)$ checks evaluating the distance to each point $p \in P_i$, where $a$ is the maximum size of any affordance set $P_i$. Therefore the total search procedure requires $O(\log n + k a)$ steps.

\subsection{Correctness}

\begin{lemma}
    \label{lem:collision}
    If $P_i$ is the set of points corresponding to a cell $c$ of the tree, then any sphere $s$ with center $x$ contained by $c$ and with radius $r \in [r_\text{min}, r_\text{max}]$ collides with a point in the point cloud $\mathit{PC}$ if and only if $s$ collides with a point in $P_i$.
\end{lemma}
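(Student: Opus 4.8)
The plan is to split the biconditional into its two directions and observe that one of them is trivial once we know the structural fact $P_i \subseteq \mathit{PC}$, which the construction guarantees. So the substantive content is (a) a structural claim about exactly which points \cref{alg:construct} places in $P_i$, and (b) the forward implication ``collides with $\mathit{PC}$ $\Rightarrow$ collides with $P_i$'', with a small case split for the $r_\text{min}$-pruning.

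First I would establish, by induction on the recursion in \cref{alg:construct}, the invariant that at every recursive call with cell $c$, sub-cloud $B$, and afforded set $z$, we have $B \subseteq c$ and $z \cup B = \{p \in \mathit{PC} : c \text{ affords } p \text{ at } r_\text{max}\}$. The base case is the root call, where $c = \mathbb{R}^k$, $B = \mathit{PC}$, $z = \emptyset$, and every point of $\mathit{PC}$ is afforded by $\mathbb{R}^k$ at any radius (take the witness $q = p$). For the inductive step I would use two elementary observations: \emph{monotonicity} --- if $c' \subseteq c$ then any point afforded by $c'$ at radius $r$ is afforded by $c$ at radius $r$ (the same witness $q$ works); and the fact that since $c_1, c_2$ are obtained from $c$ by moving one face to the median plane and $B$ is split by that very plane, each $p \in B_1$ still lies in $c_1$ (so $c_1$ affords $p$, witnessed by $q = p$), and symmetrically $B_2 \subseteq c_2$. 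Combining the parent invariant with these observations and the definition $z_1 = \{p \in z \cup B_2 : c_1 \text{ affords } p \text{ at } r_\text{max}\}$ gives $z_1 \cup B_1 = \{p \in \mathit{PC} : c_1 \text{ affords } p \text{ at } r_\text{max}\}$, and likewise for the right child; the containment $B_1 \subseteq c_1$ is immediate. At a leaf the sub-cloud is a single representative point $p_r$ with cell $c$, so the invariant yields $z \cup \{p_r\} = \{p \in \mathit{PC} : c \text{ affords } p \text{ at } r_\text{max}\}$ (and $p_r$ is itself always afforded, as $p_r \in c$). \cref{alg:construct} sets $P_i$ to this entire set in the generic case and to the singleton $\{p_r\}$ only in the pruned case, which is entered precisely when $\|q - p_r\| \le r_\text{min}$ for every $q \in c$. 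In both cases $P_i \subseteq \mathit{PC}$, which settles the easy direction of the lemma.

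For the forward implication, suppose $s$ has center $x \in c$ and radius $r \in [r_\text{min}, r_\text{max}]$ and collides with some $p \in \mathit{PC}$, i.e. $\|x - p\| \le r$. In the generic (unpruned) case, $\|x - p\| \le r \le r_\text{max}$ together with $x \in c$ witnesses that $c$ affords $p$ at $r_\text{max}$, so $p \in P_i$ and $s$ collides with a point of $P_i$. In the pruned case, $x \in c$ forces $\|x - p_r\| \le r_\text{min} \le r$, so $s$ collides with $p_r \in P_i$. These two cases are exhaustive, completing the proof.

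The main obstacle I anticipate is the construction invariant: getting the quantifiers right in the inductive step, in particular verifying that no afforded point is ever silently dropped when passing from $c$ to a child cell --- this is exactly where monotonicity and the implication ``$p \in B_j \Rightarrow p \in c_j$'' do the work --- and checking that the padding points at infinity, and the (abuse of) half-open unbounded boundary cells, do not interfere (they cannot collide with any finite query sphere). The $r_\text{min}$ case analysis and the two directions of the biconditional are routine once the invariant is in hand.
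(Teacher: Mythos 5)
Your proof is correct and follows essentially the same route as the paper's: the backward direction is dispatched via $P_i \subseteq \mathit{PC}$, and the forward direction is the same two-case analysis on whether the $r_\text{min}$ pruning applies, using that in the unpruned case $P_i$ is exactly the set of points afforded by $c$ at $r_\text{max}$. The only difference is that you actually prove that structural characterization of $P_i$ by induction on the recursion in \cref{alg:construct}, whereas the paper simply asserts it ``by construction''---your version is the more complete argument.
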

\begin{proof}
    Since $P_i \subseteq \mathit{PC}$, it is trivial to show that if $s$ collides with a point in $P_i$, then it collides with a point in $\mathit{PC}$.
    Now we must prove the converse\add{;} that is,  $\exists p \in \mathit{PC} : \|x - p\| \leq r \rightarrow \exists q \in P_i : \|x - q\| \leq r$. Let $t$ be the single point in $\mathit{PC}$ contained by $c$.
    
    Case 1: $\exists q \in c : \|q - t\| > r_\text{min}$. Then, by construction, $P_i = \{ p \in \mathit{PC} : \min_{x_c \in c} \|x_c - p \| \leq r_\text{max}\}$. If $\exists p \in \mathit{PC} : \|x - p \| \leq r$, then $\min_{x_c \in c} \|x_c - p\| \leq r \leq r_\text{max}$. Therefore if the sphere centered at $x$ collides with $p$, then $x$ collides with a point in $P_i$.

    Case 2: $\nexists q \in c : \|q - t\| > r_\text{min}$. By construction, $P_i = \{t\}$. Since $x \in c$, $\|x - t \| \leq r_\text{min} \leq r$, so if $s$ is in collision with $\mathit{PC}$, then $s$ is in collision with $P_i$.
\end{proof}

\begin{lemma}
    \label{lem:filter}
    There exists a filter radius $r_\text{filter}$ such that the filtering process does not insert a gap in the point cloud $\mathit{PC}$ larger than the minimum collision-check sphere diameter $2 r_\text{min}$.
\end{lemma}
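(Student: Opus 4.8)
The plan is to exhibit an explicit choice of $r_\text{filter}$ and then track how the surviving point set shrinks across the passes of \cref{alg:filter}. Write $K_0 = \mathit{PC}$ and let $K_1 \supseteq K_2 \supseteq \cdots \supseteq K_m$ be the point sets after each of the $m = k!$ permutation passes, so that $\mathit{PC}' = K_m$ is the returned cloud. I will show that if $r_\text{filter} \le 2 r_\text{min}/k!$, then every point of $\mathit{PC}$ lies within distance $2 r_\text{min}$ of some point of $K_m$. Since the discussion preceding the lemma identifies the "gap" introduced by filtering with the largest distance from a removed point to the remaining cloud (equivalently, the extra depth a padded robot sphere could penetrate), this bound discharges the statement: filtering never opens a hole wider than the diameter $2 r_\text{min}$ of the smallest collision sphere.

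The first step is a single-pass invariant. Fix one permutation and let $q_0, q_1, \dots$ denote the input to that pass in Z-order. The inner loop is an in-place compaction: the index $i$ is non-decreasing and increases by at most one per iteration, so $i \le j$ throughout and every write $\mathit{PC}'_i \leftarrow \mathit{PC}'_j$ targets an already-processed front position; moreover, once a point becomes the representative at a given position it is never overwritten again, so the representatives are exactly the survivors of the pass. A point $q_j$ is discarded precisely when $\|\mathit{PC}'_i - q_j\| \le r_\text{filter}$, with $\mathit{PC}'_i$ the current representative, hence a survivor. Therefore, after pass $\ell$, every point of $K_{\ell-1}$ lies within $r_\text{filter}$ of some point of $K_\ell$ (it is either kept, or discarded and thus close to its representative). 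This step uses nothing about the space-filling curve; the Z-order only affects which points are discarded, not the distance guarantee.

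The second step chains these bounds with the triangle inequality: by induction on $\ell$, every point of $K_0 = \mathit{PC}$ lies within $\ell \cdot r_\text{filter}$ of some point of $K_\ell$, the inductive step composing an $(\ell-1)r_\text{filter}$ bound to $K_{\ell-1}$ with an $r_\text{filter}$ bound from $K_{\ell-1}$ to $K_\ell$. Taking $\ell = k!$ and $r_\text{filter} = 2 r_\text{min}/k!$ (for instance $r_\text{filter} = r_\text{min}/3$ when $k = 3$) yields: every $p \in \mathit{PC}$, in particular every removed point, has a surviving $p^* \in \mathit{PC}'$ with $\|p - p^*\| \le 2 r_\text{min}$, which is the desired conclusion.

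I expect the main obstacle to be stating and justifying the single-pass invariant cleanly in the presence of the in-place overwriting of $\mathit{PC}'$ — one must verify that the representative used in each comparison is a genuine survivor and that no still-needed data is clobbered — together with being precise about the notion of "gap" so that the inequality $\|p - p^*\| \le 2 r_\text{min}$ really settles the stated claim. The cross-pass composition, although it is what forces the $k!$ factor in the admissible $r_\text{filter}$, is then routine.
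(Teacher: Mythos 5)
Your argument is sound and is genuinely different from the paper's. The paper proves the lemma extrinsically: it introduces the $L_2$ dispersion $\delta(O, \mathit{PC})$ of the cloud relative to the underlying obstacle volume $O$, observes that any removed point has a retained point within $r_\text{filter}$, and concludes in one step that any sphere of radius $r_\text{min}$ centered in $O$ still hits a retained point provided $r_\text{filter} \leq r_\text{min} - \delta(O, \mathit{PC})$. Your proof is purely intrinsic to the cloud and, importantly, is more careful on a point the paper glosses over: because \cref{alg:filter} runs $k!$ passes, the witness $p^*$ that justified deleting $p$ in an early pass may itself be deleted later, so the honest distance bound from a removed point to the final survivor set is $k! \cdot r_\text{filter}$ obtained by chaining the triangle inequality across passes, exactly as you derive --- the paper's claim that every removed point is within $r_\text{filter}$ of the \emph{final} set $\mathit{PC}'$ is only guaranteed per pass. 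Your single-pass compaction invariant (that $i \leq j$ throughout, that representatives are never overwritten, and hence that every discarded point is within $r_\text{filter}$ of a genuine pass-survivor) is correctly verified. What the paper's route buys is the connection to the solid obstacle via $\delta$, which is what the surrounding text actually needs (a robot sphere cannot pass through the obstacle \emph{surface}, not merely through the sampled cloud), and the constant used in the experiments. What your route buys is rigor about multi-pass accumulation and independence from any assumption on how well $\mathit{PC}$ samples $O$. One residual caveat: the paper measures a ``gap'' as a diameter, i.e., twice the radius of the largest empty ball, so under that reading you would want the chained distance bounded by $r_\text{min}$ rather than $2 r_\text{min}$, i.e., $r_\text{filter} \leq r_\text{min}/k!$; since the lemma only asserts existence of a suitable $r_\text{filter}$, this factor of two does not affect correctness, but it is worth fixing the convention explicitly as you yourself anticipate.
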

\begin{proof}

    \add{
        Let $O \subseteq \mathbb{R}^k$ be the ground truth obstacle volume for a robot environment, and let  $\mathit{PC} \subseteq O$ be its surface approximation as a point cloud. 
        Then the $L_2$ dispersion $\delta(O, \mathit{PC})$, defined following~\citet{lavalle2006planning}, is:
    }
    \begin{equation*}
        \add{\delta(O, \mathit{PC}) = \sup_{x \in O} \min_{p \in PC} \|x - p\|}
    \end{equation*}
    \add{
        That is, any point $x$ contained in $O$ must be no further than $\delta(O, \mathit{PC})$ from some point $p$ contained in $\mathit{PC}$.
        }

    \add{
        Now consider some sphere $s$ with diameter $2r_\text{min}$. 
        To fit in a gap of width $2r_\text{min}$ in $PC$, $s$ must have center $x \in O$.
        Then, by the definition of dispersion, $\exists p \in \mathit{PC} : \|x - p \| \leq \delta(O, \mathit{PC})$.}

    \add{
        The filter algorithm presented in~\cref{alg:filter} only removes a point $p \in \mathit{PC}$ if $\exists p^* \in \mathit{PC}': \|p - p^*\| \leq r_\text{filter}$; that is, if there is already a retained point closer than $r_\text{filter}$ to the point in question.
        Thus, the largest gap that the filter can introduce is bounded by $2\left(r_\text{filter} + \delta(O, \mathit{PC})\right)$.
        So, it follows that $\exists p^* \in \mathit{PC}' : \|x - p^*\| \leq r_\text{filter} + \delta(O, \mathit{PC})$.
        To prevent introducing gaps larger than $2r_\text{min}$, we then require that:
        \begin{equation*}
            \|x - p^*\| \leq r_\text{filter} + \delta(O, \mathit{PC}) \leq r_\text{min}
        \end{equation*}
        Rearranging, we reach $r_\text{filter} \leq r_\text{min} - \delta(O, \mathit{PC})$.
        }
\end{proof}

\add{The choice of $\mathit{r_\text{filter}}$ as outlined by \cref{lem:filter} ensures that the center of any sphere of the robot does not intersect the obstacle $O$, preventing the robot from travelling directly through an obstacle's surface.
However, like any filtering scheme, this may somewhat reduce the envelope of the point cloud.
The simplest way to overcome this issue is to instead substitute all query radii $r_q$ with $r_q' = r_q + r_\text{filter}$. 
An alternate interpretation of this is that all points in the filtered point cloud are padded by $r_\text{filter}$ to become solid, volumetric spheres. 
With this padding, any value of $r_\text{filter}$ may be used.}

\add{In practice, however, we find empirically that aggressively padding queries or selecting a small $r_\text{filter}$ is overly conservative (\ie{} unnecessary for maintaining plan validity) and detrimental to \capt{} construction performance, so more aggressive filtering is possible.}

The lemmas above ensure that collision-checking using a \nameoftechnique{} does not change problem feasibility compared to brute-force collision-checking; \ie{} a planner using a \nameoftechnique{} will not erroneously report that a problem is solvable or unsolvable due to the collision-checking backend.
Lemma \ref{lem:collision} implies that a \nameoftechnique{} does not alter the collision status of any query sphere, so any trajectory which is valid through brute force collision checking is also valid when using a \nameoftechnique{}.
Likewise, Lemma \ref{lem:filter} implies that the filtering procedure is sufficiently conservative to avoid creating spurious gaps, meaning that the filtering process does not make invalid plans feasible, subject to the correct radius padding.

\section{Experiments}\label{sec:experiments}
\begin{figure*}
    \centering
    \includegraphics[width=\textwidth]{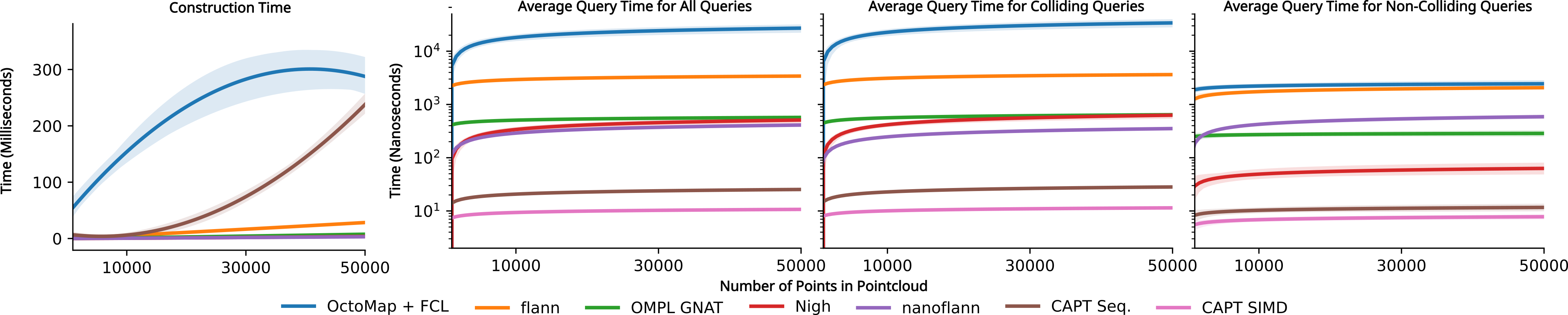}
    \caption{Construction times and average query throughput for pointcloud collision-checking approaches.
    Methods were evaluated on exemplary pointclouds from each of the 7 benchmark datasets from MotionBenchMaker~\cite{chamzas2021mbm}, and evaluated against the set of all collision queries attempted by the motion planner.
    On the left, construction times are presented in milliseconds.
    On the right, average query time in nanoseconds are shown on a \textbf{logarithmic} scale for all queries, only colliding queries, and queries that are not in collision.
    99\% confidence intervals are shown over 2\textsuperscript{nd}-order polynomial fitted curves for build time and linear log fit curves for query time.
    }
    \label{fig:throughput}
\end{figure*}

\begin{table*}
  \centering
  \fontsize{6pt}{6pt}\selectfont
  \begin{tabular}{r | r | r | r r r | r r r | r | r r r | r}
  & Backend & Mean Filter & Mean Build & Med. Build & 95\% Build & Mean Plan & Med. Plan & 95\% Plan & Mean Simpl. & Mean Total & Med. Total & 95\% Total & Succ. \\
  \hline
\hline
\parbox[t]{0.1mm}{\multirow{5}{*}{\rotatebox[origin=c]{90}{UR5}}}
& OctoMap         &      \multirow{4}{*}{2.897} &    120.935 &    104.892 &    218.075 &     78.894 &     24.736 &    375.782 &     23.290 &    220.906 &    179.170 &    521.604 &  96.5\%\\
& nanoflann       &            &      0.346 &      0.339 &      0.596 &     15.526 &      4.797 &     78.979 &      7.810 &     26.730 &     15.806 &     92.221 & 100.0\%\\
\arrayrulecolor{gray}\cline{4-6}\arrayrulecolor{black}
& \capt{} Seq.    &            &      \multirow{2}{*}{5.859} &      \multirow{2}{*}{5.608} &     \multirow{2}{*}{10.467} &      0.833 &      0.273 &      4.226 &      0.368 &      9.935 &      8.829 &     16.279 & 100.0\%\\
& \capt{} \simd{}    &            &           &           &           &      \textbf{0.490} &      \textbf{0.146} &      \textbf{2.542} &      0.225 &      \textbf{9.499} &      \textbf{8.630} &     \textbf{15.541} & 100.0\%\\
\arrayrulecolor{gray}\cline{2-14}\arrayrulecolor{black}
& Primitives      &          - &          - &          - &          - &      0.204 &      0.051 &      1.039 &      0.067 &      0.272 &      0.117 &      1.145 & 100.0\%\\
\hline
\parbox[t]{0.1mm}{\multirow{5}{*}{\rotatebox[origin=c]{90}{Panda}}}
& OctoMap         & \multirow{4}{*}{2.712} &     66.127 &     49.503 &    122.043 &     25.579 &      9.363 &    109.210 &     24.424 &    118.844 &    103.436 &    223.829 &  99.6\%\\
& nanoflann       &            &      0.315 &      0.233 &      0.611 &      5.958 &      3.241 &     20.352 &      7.423 &     16.607 &     13.726 &     39.647 & 100.0\%\\
\arrayrulecolor{gray}\cline{4-6}\arrayrulecolor{black}
& \capt{} Seq.    &            &      \multirow{2}{*}{4.269} &      \multirow{2}{*}{3.227} &      \multirow{2}{*}{8.758} &      0.342 &      0.182 &      1.166 &      0.477 &      7.781 &      6.430 &     15.078 & 100.0\%\\
& \capt{} \simd{}    &            &           &           &           &      \textbf{0.198} &      \textbf{0.102} &      \textbf{0.728} &      0.261 &      \textbf{7.445} &      \textbf{6.055} &     \textbf{14.704} & 100.0\%\\
\arrayrulecolor{gray}\cline{2-14}\arrayrulecolor{black}
& Primitives      &          - &          - &          - &          - &      0.078 &      0.034 &      0.363 &      0.070 &      0.148 &      0.100 &      0.468 & 100.0\%\\
\hline
\parbox[t]{0.1mm}{\multirow{5}{*}{\rotatebox[origin=c]{90}{Fetch}}}
& OctoMap         &      \multirow{4}{*}{3.345} &    160.976 &    124.376 &    335.959 &    309.302 &    212.953 &    894.198 &    121.892 &    429.362 &    375.223 &    900.221 &  96.6\%\\
& nanoflann       &            &      0.469 &      0.391 &      0.863 &    177.912 &     70.510 &    661.252 &     33.131 &    204.500 &    102.742 &    680.011 &  99.9\%\\
\arrayrulecolor{gray}\cline{4-6}\arrayrulecolor{black}
& \capt{} Seq.    &            &      \multirow{2}{*}{6.073} &      \multirow{2}{*}{5.145} &     \multirow{2}{*}{10.508} &     18.331 &      3.863 &     76.251 &      1.831 &     29.624 &     16.015 &     88.011 &  99.7\%\\
& \capt{} \simd{}    &            &           &           &           &     \textbf{10.736} &      \textbf{2.159} &     \textbf{42.974} &      0.983 &     \textbf{21.213} &     \textbf{12.648} &     \textbf{58.239} &  99.7\%\\
\arrayrulecolor{gray}\cline{2-14}\arrayrulecolor{black}
& Primitives      &          - &          - &          - &          - &      3.873 &      0.779 &     16.290 &      0.262 &      4.136 &      0.966 &     16.744 &  99.3\%\\
  \end{tabular}
  \caption{Statistics over the MotionBenchMaker~\cite{chamzas2021mbm} dataset.
   We compare parallel (\capt{} \simd{}) and sequential (\capt{} Seq.) \nameoftechnique{} collision checking against other collision checking backends. 
  All collision-checking backends except for the primitive geometry used the same filtered point clouds for planning.
  We report the mean, median, and 95th percentile times spent constructing each collision-checking data structure, planning, simplifying the path, and the total time spent from observation to completed plan for each robot and collision-checking backend.
  All times are in \textbf{milliseconds}.}
  \label{table:mbm}
  \end{table*}

We benchmarked collision-checking throughput on an \amd Ryzen\texttrademark~9 7950X \cpu clocked at 4.5GHz against six different collision-checking and nearest-neighbor implementations. 
We compared against OctoMaps~\cite{Hornung2013}, a voxel-based method, backed by \fcl{}~\cite{Pan2012a}; Nigh~\cite{ichnowski2018nigh} and \nanoflann{}~\cite{blanco2014nanoflann}, $k$-d tree implementations; \gnat{}~\cite{brin1995gnat}, a hyperplane partitioning tree (as implemented in the Open Motion Planning Library~\cite{Sucan2012}); and \flann{}~\cite{muja2009flann}, an approximate nearest-neighbor library.
Our approach was implemented in C++ and integrated with an existing vector-accelerated motion planning framework \cite{Thomason2023VAMP}. 
This planner represents the robot as a hierarchy of spheres, so all robot-environment collision-checking was performed using the \capt{}. 
We also implemented a collision-checking backend using sequential queries against the tree.
Here, sequential refers to collision checking each of the $n$ spheres packed into a \simd{} vector sequentially, rather than using \simd{} intrinsics to evaluate, thus demonstrating the benefits of \simd{} parallelism.
Sequential queries are used as well for \nanoflann{} and OctoMap collision checking.

We benchmark planning performance on the challenging MotionBenchMaker~\cite{chamzas2021mbm} dataset, with 3 robots (the 6-\dof UR5, the 7-\dof Panda, and the 8-\dof Fetch) in 7 scenes (\emph{table pick}, \emph{table under pick}, \emph{box}, \emph{cage}, \emph{bookshelf small}, \emph{bookshelf tall}, and \emph{bookshelf thin}) each, performing 100 different planning problems per scene.

All motion planning was performed on a single thread, using an implementation of a dynamic-domain~\cite{Jaillet2005} balanced~\cite{kuffner2005balanced} RRT-Connect~\cite{Kuffner2000} limited to 1 million iterations.
All plans used the same sequence of randomly sampled configurations, so any difference in performance is from collision-checking speed, not from sampling order.
All code was compiled with \verb|clang 16.0.6| using the \verb|-O3| compiler optimization level along with native \cpu{} optimizations.

\subsubsection{\add{Collision query throughput}}
\add{
    We began by evaluating collision-checking throughput on all of the possible backends.
    First, we recorded the set of all collision-checking queries made by a motion planner using ground-truth primitive geometry on each scene from the MotionBenchMaker~\cite{chamzas2021mbm} dataset.
    We then generated point clouds for each scene by uniform random sampling of the geometry's surface, then filtered each point cloud with $r_\text{filter} \in [1 \si{\milli \meter}, 10 \si{\centi \meter}]$ to reach a desired size.
    All collision-checking throughput experiments were performed on the same set of point clouds.
    Finally, we executed the exact same queries on each collision-checking method, recording the total timing for collision-checking and avoiding any other timing overhead from other steps in the motion planning process.
    \capts{} were constructed with $r_\text{min} = 1\si{\centi \meter} $ and $r_\text{max} = 8\si{\centi \meter}$.
    OctoMaps were constructed with a resolution of 1\si{\centi \meter}.
    FLANN indices were created with 4 $k$-d trees.
    All experiments were performed on a single \cpu{} thread.
}

\subsubsection{\add{Motion planning performance}}
\add{
    We implemented full motion-planning backends using OctoMaps and NanoFLANN (as it had the the next-highest throughput, after \capts{}, of any collision-checking method).
    Each backend was used as part of the same motion planning system, so all speedups are due to collision-checking speed, not sampling order or other aspects of planner efficiency.
    We compared the relative performance of $\capts{}$, using both sequential and parallelized \simd{} queries, with these two backends.
    All point clouds were filtered with $r_\text{filter} = 2\si{\centi \meter}$; although this filter radius is greater than that suggested by \cref{lem:filter}, it was empirically tested to strike an appropriate balance between performance and fidelity, \ie{} not allowing invalid plans.
    See~\cref{app:rfilter} for further experiments on the effect of $r_\text{filter}$.
    \capts{} were constructed with $r_\text{min}$ and $r_\text{max}$ derived from robot geometry; $(r_\text{min}, r_\text{max})$ was equal to $(1.5\si{\centi \meter}, 8\si{\centi \meter})$, $(1.2\si{\centi \meter}, 6\si{\centi \meter})$, and $(1.2\si{\centi \meter}, 5.5\si{\centi \meter})$ for the UR5, Panda, and Fetch respectively.
    OctoMaps were constructed with a resolution of 1\si{\centi \meter}.
    Once again, these tests were performed exclusively on a single \cpu{} thread to isolate per-thread performance; we leave thread-level parallelization to future work.
}

\subsection{Empirical Results}\label{sec:results}

\subsubsection{\add{Collision query throughput}}
\label{subsec:throughputresults}
\mf{fig:throughput} presents timing results for \add{\capt{}} construction times and average query throughput times for three different classes of tests: all-colliding queries are a set of queries where each sphere in the query set collides; non-colliding queries have no sphere in collision, and mixed queries are a mix of all-colliding, non-colliding, and partially-colliding queries.
Queries against the point cloud were created by recording the set of all queries made by a motion planner in the scene, then recording the runtime of \add{checking the same sequence of queries} \add{against} each collision-checking system.
We observe that \capt{} construction is significantly slower than other tree-based nearest neighbor data structures, but is still faster than an OctoMap for construction on point cloud data. 
The construction procedure exhibits significantly superlinear scaling, showing that point cloud filtering is necessary to use a \capt{} effectively.

Collision queries against the tree are overwhelmingly faster than any other data structure, running nearly ten times faster than the nearest data structures for collision checking, for an average performance of 9.89 nanoseconds per query---in comparison, the next best performing approach, \nanoflann{}, takes on average 309 nanoseconds per query.
Remarkably, collision checks against an OctoMap are over three orders of magnitude slower than against a \capt{} (averaging 0.01 milliseconds per query).
This demonstrates a need for the field to reevaluate methods for planning with sensor data: it is possible to achieve extremely high performance gains with a different data structure.

\subsubsection{\add{Motion planning performance}}\label{subsec:simexp}
\Cref{table:mbm} shows some critical statistics for filtering, collision checking data structure construction, planning, simplification, and total time for the 6-\dof UR5, 7-\dof Panda, and 8-\dof Fetch over the MotionBenchMaker dataset, as described above.
These benchmarks show a dramatic improvement in performance over baseline approaches, with \simd{} collision checking with a \capt{} demonstrating planning times on par with the ground-truth primitive-based planner.
For both the UR5 and Panda arms, the 95\% quantile of total time end-to-end (filtering, building the \capt{}, planning, and simplification) takes less than 16 milliseconds, faster than a 60FPS camera can refresh and provide a new pointcloud.
We also highlight that \capts{} provide such an enormous speedup that \textbf{motion generation is no longer the most expensive step}. Instead, other steps in the planning pipeline dominate planning times: point cloud filtering and \capt{} construction account for the lion's share of planning time.

\subsection{Planning from real sensor data}\label{subsec:realrobot}

Finally, we applied our planning system to \add{point clouds observed} from a real-world scene with an Intel RealSense D455 RGB-D camera\add{. }\cref{fig:realsense} \add{shows an example of these data}.
\subsubsection{\add{Static scene}}
We \add{first} created a planning problem in \add{a static snapshot of} this scene, requiring a UR5 robot to move from its initial pose to a "reach" point across the table.
The original point cloud contained 166587 points; applying our space-filling curve filter (\cref{sec:filtering}) with $r_\text{filter} = 2 \si{\centi \meter}$ reduced the cloud down to 2732 points.
\add{
    In this experiment, we used a minimum radius $r_\text{min} = 1.5 \si{\centi \meter}$ to match the geometry of our model of the UR5. 
    As in~\cref{subsec:simexp}, we chose to use a larger $r_\text{filter}$ than suggested by \cref{lem:filter} because it empirically did not reduce plan quality; for timings with different values of $r_\text{filter}$, see \cref{app:rfilter}.} 
Running on the same machine as our other experiments, we observe a median planning time of 215 microseconds, with a simplified path returned in a total of 575 microseconds.
The total duration from the start of point cloud filtering through \capt{} construction, planning, and simplification was a median of 7.166 milliseconds---corresponding to a complete planning rate of roughly 140Hz.

\subsubsection{\add{Dynamic scene}}
\add{We additionally evaluated our planning system in a live control loop on the UR5, tasking it with moving between a sequence of preset goal waypoints while dodging unmodeled dynamic obstacles (\ie{} pool noodles moved by humans to obstruct the robot).
Planned trajectories are passed to a simple velocity interpolation controller for time parameterization and execution, and are replaced and updated on every new point cloud.
We observe that the system is able to plan at or above the 60FPS camera frame rate; qualitatively, this speed enables the robot to reactively dodge obstacles and effectively maneuver in the scene, despite a lack of motion forecasting or obstacle modeling.
We report additional statistics on planning performance and point cloud properties for this experiment in~\cref{app:realrobot}; please also see the supplementary material for a video showing the robot in action.}

\section{Conclusion}\label{sec:conclusion}

Planning from sensor data is a crucial component of autonomous robotics.
In this paper, we present a novel data structure for motion planning with observed point clouds, demonstrating an order-of-magnitude speedup compared to state-of-the-art techniques.
We also present a unique filtering algorithm to reduce the density of a point cloud while still providing safety guarantees on collision detection.
Combined, these two contributions enable a robot to plan from sensor data in milliseconds on a single CPU core, allowing the robot to plan faster than standard 60FPS camera refresh rates.
This means that \emph{robots can now use sampling-based motion planning in real time on purely sensed environments, using only general-purpose low-power hardware.}

The primary limitation of a \capt{} is that it is an immutable data structure. 
After construction, no points in the tree can be inserted or deleted.
\add{Since depth-camera images are streamed on a frame-by-frame basis, this is not a problem for collision-checking in dynamic environments, since we can reconstruct the \capt{} from scratch for each frame.}
\add{However, the \capt{}'s immutability} precludes use of a \capt{} for the state-space nearest-neighbor search required by most sampling-based planning algorithms.
Future extensions to the \capt{} structure could enable incremental updating, which would allow it to be used for nearest-neighbor search in the state space during sampling-based planning.
\add{Unlike OctoMaps~\cite{Hornung2013}, the \capt{} does not distinguish between free and unobserved space.
This may be problematic for cluttered environments due to occlusions, and in future work we are interested in extending the \capt{} to more directly model visibility and occlusion, as well as to better handle perceptual uncertainty by \eg{} modeling points as probabilistic particles.}

As well, although our choice to duplicate potentially colliding points to construct the affordance sets enables \capts{} to avoid branches and effectively exploit parallelism, it also limits their capacity for scaling to massive point clouds, such as those constructed by autonomous vehicles.
In future work, we would be interested in exploring techniques for compressing or otherwise de-duplicating affordance sets.
Perhaps more promising is the potential for using \capts{} as a secondary collision data structure paired with another form of spatial subdivision, such as a spatial hash or voxel grid~\cite{Oleynikova2017,Millane2023}.
This hierarchical fused data structure would allow a set of \capts{} to each be ``responsible'' for only a local neighborhood of a large point cloud while maintaining efficient and parallelizable queries over the entire cloud.

Finally, our performance results challenge conventional assumptions about the nature of planning.
We have demonstrated that judicious application of parallelism and insights into the core problems of collision checking against sensor data enables extraordinary improvements in planning time, so much so that motion planning from sensor data could now be seen as a cheap primitive operation, instead of a time-consuming bottleneck.

\printbibliography{}

\clearpage

\appendix
\subsection{\add{Empirical impact of filter radii}}

\label{app:rfilter}
\add{We ran the simulated planning experiments from~\cref{subsec:simexp} for the Panda robot with different values for the $r_\text{filter}$ parameter to~\cref{alg:filter} to investigate its effect on \capt{} construction times, planning performance, and problem feasibility.
These results are shown in~\cref{table:mbm2}.
We note that, although the \capt{} is sensitive to the value of $r_\text{filter}$ in its construction time in particular, even for conservative values of $r_\text{filter}$ (\eg{} matching or less than the bound suggested by~\cref{lem:filter}), our planning times are always faster than any baseline, and our total times are competitive or fastest.
Note also that the baselines are only evaluated with the most aggressive value of $r_\text{filter}$, and would also be slowed in planning and total time for more conservative values.
Finally,~\cref{table:mbm2} also validates that, for all values of $r_\text{filter}$, the paths we find are valid \textbf{with respect to the exact, primitive geometric obstacles}, evaluated post hoc.}

\subsection{\add{Empirical dispersion of point clouds}}

\add{
    We recorded the values of dispersion $\delta(O, \mathit{PC})$ as discussed in \cref{lem:filter} across the point clouds used for our simulation collision query throughput and planning performance experiments.
    We compute dispersion by using a standard nearest-neighbors data structure to query the distance to the closest neighbor of each point in each point cloud for each scene, and recording basic summary statistics, as shown in~\cref{table:dispersion}.
    We note that $\delta(O, \mathit{PC})$ is often quite small for well-observed obstacles; further, as argued in the proof sketch for~\cref{lem:filter}, the increase in dispersion resulting from applying~\cref{alg:filter} is bounded by $r_\text{filter} + \delta(O, \mathit{PC})$.
    Therefore, selecting an $r_\text{filter}$ in the same order of magnitude as $r_\text{min}$ is a reasonable choice.
}

\subsection{\add{Real-robot planning experiments}}\label{app:realrobot}
\label{app:dispersion}

\add{
We evaluate the impact of the value of $r_\text{filter}$ via the real-robot demonstration of planning with \capts{} discussed in \cref{subsec:realrobot}.
We collected 300 observed point clouds from sequential frames of RGB-D video generated by an Intel Realsense D455 sensor, and computed the mean post-filter point cloud sizes and essential timing statistics (\ie{} timing for applying the filter, building a \capt{} on the filtered point cloud, solving a motion planning problem with the \capt, and simplifying the solution found---which requires further collision-checking) for a range of values of $r_\text{filter}$, keeping the values of $r_\text{min}$ and $r_\text{max}$ constant at 1.5\si{\centi \meter} and 8\si{\centi \meter} respectively.
\Cref{table:rfilter} shows these quantitative results, demonstrating that although \capts{} remain fast at conservatively small values of $r_\text{filter}$, increasing $r_\text{filter}$ dramatically decreases both point cloud size and \capt{} construction time.
}

\add{
Crucially, we also qualitatively find that, for any $r_\text{filter} \leq 2\si{\centi \meter}$, the generated trajectories are valid and do not intersect or contact any obstacles.
As such, broadly speaking, a user can vary the value of $r_\text{filter}$ to trade fidelity of representation for performance, and reasonable balances of the two are easy to find.
}

\begin{table}
  \centering
  \fontsize{8pt}{8pt}\selectfont
  {\begin{tabular}{ r | r r r | r r r }
  & \multicolumn{3}{c|}{$\delta(O, \mathit{PC})$} &  \multicolumn{3}{c}{$\delta(O, \mathit{PC}')$} \\
    $r_\text{filter}$ & Mean & Median & 95\% & Mean & Median & 95\% \\
    \hline
0.5 & \multirow{6}{*}{0.00336} & \multirow{6}{*}{0.00198} & \multirow{6}{*}{0.01075} & 0.00763 & 0.00675 & 0.01314 \\
1 &  &&  & 0.01253 & 0.01182 & 0.01774 \\
1.5 &  &&  & 0.01758 & 0.01696 & 0.02314 \\
1.8 & &&  & 0.02055 & 0.01995 & 0.02663 \\
1.9 &  &&  & 0.02158 & 0.02097 & 0.02790 \\
2 & && & 0.02261 & 0.02199 & 0.02919 \\
  \end{tabular}}
    \caption{\add{Empirical measurements of point cloud dispersion before (left grouping) and after (right grouping) applying the filter proposed in~\cref{alg:filter} with a range of $r_\text{filter}$ values (leftmost column, in \si{\centi\meter}).
    Pre-filter values are identical and accumulated over all MotionBenchMaker problems used for evaluation with the Panda robot; all dispersion values are given in \si{\centi\meter}.}}
    \label{table:dispersion}
\end{table}

\begin{table}
\centering
{\begin{tabular}{r|c|rrrr|r}
$r_\text{filter}$ & $|PC'|$ & Filter & Build & Plan & Simpl. & Total \\ \hline
1&16225 & 6.944 & 30.373 & 0.168 & 0.275 & 37.761 \\
1.5& 7872 & 5.682 & 7.669 & 0.079 & 0.085 & 13.517\\
2&  4614 & 4.990 & 3.964 & 0.087 & 0.089 & 9.131 \\
\end{tabular}}
\caption{\add{Impact of $r_\text{filter}$ (in centimeters) on planning times for the real robot experiment with the UR5. The initial pointcloud size $|PC|$ is always 307200 (constructed from a 640x480 pixel depth image).
We report mean point cloud sizes, mean point cloud filtering times, mean \capt{} construction times, mean motion planning times, and mean path simplification times, as well as the mean total end-to-end planning time. 
$r_\text{filter}$ is in \textbf{centimeters}, and all times are in \textbf{milliseconds}.}}
\label{table:rfilter}
\end{table}

\onecolumn
\begin{table*}[h]
  \centering
  \fontsize{6pt}{6pt}\selectfont
  {\begin{tabular}{ r | r | r r r | r r r | r | r r r | r}
  $r_\text{filter}$ & Mean Filter & Mean Build & Med. Build & 95\% Build & Mean Plan & Med. Plan & 95\% Plan & Mean Simpl. & Mean Total & Med. Total & 95\% Total & Succ. \\
\hline
0.5 &      4.544 &     62.560 &     56.294 &    122.966 &      0.427 &      0.210 &      1.581 &      0.480 &     68.012 &     61.546 &    132.708 & 100.0\\
1 &      3.307 &     19.339 &     13.286 &     41.888 &      0.288 &      0.136 &      1.136 &      0.344 &     23.280 &     16.514 &     49.667 & 100.0\\
1.5 &      2.742 &      8.069 &      5.954 &     16.595 &      0.228 &      0.112 &      0.923 &      0.288 &     11.329 &      8.753 &     22.851 & 100.0\\
1.8 &      2.514 &      5.378 &      4.278 &     10.895 &      0.212 &      0.103 &      0.923 &      0.265 &      8.370 &      6.905 &     16.565 & 100.0\\
1.9 &      2.454 &      4.701 &      3.613 &      9.760 &      0.207 &      0.102 &      0.734 &      0.265 &      7.628 &      6.281 &     15.149 & 100.0\\
2 &      2.400 &      4.098 &      3.125 &      8.467 &      0.192 &      0.099 &      0.702 &      0.253 &      6.945 &      5.658 &     13.812 & 100.0\\
  \end{tabular}}
    \caption{\add{Effect of $r_\text{filter}$ on MotionBenchMaker planning performance for the Panda robot for \capt{} \simd{}.
    All planning results are valid with respect to the underlying primitive scene representation sampled to generate simulated pointclouds.
    $r_\text{filter}$ is given in \textbf{centimeters}, and all times are given in \textbf{milliseconds}.
    }}
  \label{table:mbm2}
\end{table*}

\end{document}